\definecolor{ocre}{RGB}{0,0,100}
\newcommand{\cmmnt}[1]{}
\newtheorem{theorem}{Theorem}
\newtheorem{corollary}{Corollary}[theorem]
\setlist{nolistsep}
\title{Investigating Convolutional Neural Networks using Spatial Orderness}
\begin{document}

\maketitle

\begin{abstract}

Convolutional Neural Networks (CNN) have been pivotal to the success of many state-of-the-art classification problems, in a wide variety of domains (for e.g. vision, speech, graphs and medical imaging). A commonality within those domains is the presence of hierarchical, spatially agglomerative local-to-global interactions within the data. For two-dimensional images, such interactions may induce an \textit{a priori} relationship between the pixel data and the underlying spatial ordering of the pixels. For instance in natural images, neighboring pixels are more likely contain similar values than non-neighboring pixels which are further apart. To that end, we propose a statistical metric called \textit{spatial orderness}, which quantifies the extent to which the input data (2D) obeys the underlying spatial ordering at various scales. In our experiments, we mainly find that adding convolutional layers to a CNN could be counterproductive for data bereft of spatial order at higher scales. We also observe, quite counter-intuitively, that the spatial orderness of CNN feature maps show a synchronized increase during the intial stages of training, and validation performance only improves after spatial orderness of feature maps start decreasing. Lastly, we present a theoretical analysis (and empirical validation) of the spatial orderness of network weights, where we find that using smaller kernel sizes leads to kernels of greater spatial orderness and vice-versa.


\cmmnt{CNNs owe much of their success to local, convolutional processing, at various levels or scales, as opposed to deep, fully connected neural networks. We show that CNNs with a greater number of convolution layers efficiently trade the un-assumptive fully connected layers with spatial-structure exploiting convolutions, thereby exploiting the imageness of the data at various scales, leading to better generalization.}
\end{abstract}

\section{Introduction}

 There has been a large body of theoretical and experimental work exploring various attributes of CNNs which may contribute towards their excellent performance and generalization abilities (\cite{zhou_generalization_icml,depth_aaai_sun,kawaguchi_generalization,ben_jio_representation,ben_jio_scaling_ai,eff_recep_cnn_nipsy,fergus_vis_cnn}. However, the unusual effectivness of CNNs on a large variety of domains (vision, audio, graphs, medical imaging) is still not entirely comprehended.  
 
 Solely from the perspective of a mathematical function, it is intruiging to see a convolutional neural network demonstrate significant performance gains, when compared to a fully connected deep neural network. It is also clear that CNNs and fully-connected neural networks (FC-NNs) showcase very different behaviour. For instance, enough empirical evidence exists \cite{residual_he,vgg_first,ben_jio_scaling_ai} to suggest that increasing depth in CNNs (by adding convolution layers) almost always leads to considerable performance improvements, on most problems. However, increasing the depth of a FC-NN quickly leads to worse test performance \cite{depth_aaai_sun}. Unlike a FC-NN, CNN exhibits translation equivariance across its layers, that enable it to achieve translation equivariant representations deep within the network. This is useful for applications where global translational symmetries exist in the data. However, even in problems where global translational symmetries do not exist (MNIST for instance), CNNs easily outperform FC-NNs \cite{dropconn_lecun}. 
 
Compared to structure-less FC-NNs, the inductive biases in a CNN are clearly better suited to handle classification problem in various domains. But, instead of a function-based introspection of a CNN, we ask which characteristics of the data itself enable these inductive biases to flourish? Would adding convolution layers still be profitable if one were to synthetically distort these "convolution-conducive" characteristics in the data ?

In this work, we systematically explore these questions, based on a hypothesis: \textit{Convolutional structure in a neural network benefits from spatially ordered data.} We define \textit{spatial order} to be the extent to which spatial proximity determines data value proximity. For images, an example of high spatial order in data is when spatially nearby pixels are more likely to have similar values than pixels which are far apart. Similarly, when the pixel intensity differences are independent of the spatial distance between the pixels (e.g. in white-noise images), the data is said to have low spatial order. Spatially ordered data is likely to contain more meaningful spatial structure and hierarchy, and can benefit from locality-preserving feedforward functions; like convolutional operations in CNNs.  
\footnote{For graphs, we can extend the definition of spatial order to one of locality. For instance, a graph where every node is connected to every other node would be a counter-example of locality in a graph.} 

A simple, novel metric for reliably quantifying spatial order in 2D data is proposed, denoted as \textit{spatial orderness}. This metric can either be computed for a single 2D image, or for an entire dataset of 2D images. First, it is shown that spatial orderness is a reliable quantifier of spatial order in the input: synthetic disruption of spatial structure in the data decreases spatial orderness. Next, we find that adding convolutional depth to a CNN ceases to yield performance improvements, when the data lacks spatial order at higher scales (Figure \ref{fig:cnn_swap_depth}). The remaining experiments and theoretical contributions of this work relates to the computation of spatial orderness of the (a) input, (b) feature maps (during and after training) and (c) kernels (post-training).

\cmmnt{The experiments reported in this work, (with CIFAR-10 and MNIST datasets) demonstrates that the convolutional feedforward model exploits residual imageness in the data at various scales, and subsequently the feature maps. }

\section{Spatial Diffusability implies Spatial Orderness}

Let us denote a set of 2D data as $I_1,I_2,..,I_N$, all of equal size. We denote underlying data generating probability distribution as $D$. Consider a spatial location $p$ within the domain of $I$, such that $I_j(p)$ denotes the value of the image $I_j$ at the spatial location $p$. Let us denote an image $X$ which is a random variable following the distribution $D$. Since $X$ is a random variable, it follows that $X(p)$, which represents the value of $X$ at spatial location $p$, is also a random variable. Proceeding with these definitions, we outline our approach for computing spatial orderness. 

For the purpose of quanitfying spatial order, we propose a spatial diffusion based generative modelling of $X$. The key observation is that nearby spatial locations (or regions) must show smaller differences in intensity values (or average intensity), \textit{compared} to spatial locations (or regions) which are further apart. Note that simple correlations between neighboring pixel values is not enough to concretely quantify the above relationship.  

We begin with three random variables extracted from different spatial locations within $X$: $X(p)$, $X(q)$ and $X(r)$. Importantly, the spatial locations $p$, $q$ and $r$ are chosen such that $q \in N(p)$ and $r \in N(N(p)) \setminus N(p)$, where $N(p)$ represents the set of neighboring spatial locations of $p$. Here $\setminus$ is the set subtraction operator, such that $r$ is at a distance of two hops from $p$. Throughout this paper, we denote this particular spatial arrangement of locations as the \textit{two-hop spatial arrangement}. With this, we can outline the relationship between the random variables $X(p)$, $X(q)$ and $X(r)$ using a normally distributed spatial diffusion process:
\begin{align}
X(q) &= X(p) + \mathcal{N}(0,\sigma) \label{eq:first_hop} \\  
X(r) &= X(q) + \mathcal{N}(0,\sigma). \label{eq:second_hop} 
\end{align}
Combining equations \ref{eq:first_hop} and \ref{eq:second_hop}, we have
\begin{equation} \label{eq:big_hop}
    X(r) = X(p) + \mathcal{N}(0,\sqrt{2}\sigma), 
\end{equation}
using which the above equations can be summarized with the relationship:
\begin{equation} \label{eq:orderness}
    \mathbb{E}\left[(X(p)-X(r))^2\right] = 2\times \mathbb{E}\left[(X(p)-X(q))^2\right],
\end{equation}
which is independent of $\sigma$. Note that equation \ref{eq:orderness} shows an asymmetric relationship between $X(p)$, $X(q)$ and $X(r)$, due to their spatial positioning. On the other hand, if the  $X(p)$, $X(q)$ and $X(r)$ are randomly permuted, the spatial diffusion model doesn't hold. This is because by doing so, we essentially remove any relationship between the pixel values and the pixel locations. In that case, the effect of random permutations results in 
\begin{equation} \label{eq:disorderness}
    \mathbb{E}\left[(X(p)-X(r))^2\right] =  \mathbb{E}\left[(X(p)-X(q))^2\right] = \mathbb{E}\left[(X(q)-X(r))^2\right],
\end{equation}
which is a symmetric relationship between $X(p)$, $X(q)$ and $X(r)$. Thus equation \ref{eq:orderness} and \ref{eq:disorderness} represent opposite extremes of high spatial orderness and low spatial orderness respectively. We now have all the necessary tools to define the spatial orderness metric, and extend it for multiple scales. It is done in the following section. 
\section{Multi-Scale Spatial Orderness} 

Given a set of images, $I_1,I_2,...,I_k$, we first extract a fixed number ($l$) of triples of pixel intensities $(I_{n(1)}(p_1),I_{n(1)}(q_1),I_{n(1)}(r_1)),...,(I_{n(l)}(p_l),I_{n(l)}(q_l),I_{n(l)}(r_l))$, such that each triple of spatial locations $(p_i,q_i,r_i)$ follows a 2-hop spatial arrangement. Here $n(i)$ denotes the image from which the $i^{th}$ triple was extracted. Next, we define the spatial orderness measure at the lowest scale as follows, 
\begin{equation}
so(I)^{1} = \left( \frac{\mathbb{E}_i \left[ \left( I_{n(i)}(p_i)-I_{n(i)}(r_i) \right)^2 \right]}{ \mathbb{E}_i \left[ \left( I_{n(i)}(p_i)-I_{n(i)}(q_i) \right)^2 \right]} \right) - 1.
\end{equation}

 Observe that $so(I)^{1}=1$, when the diffusion process $P \xrightarrow{} Q \xrightarrow{} R$ is strictly followed (equation \ref{eq:orderness}), whereas it drops down to zero when all spatial order is removed e.g. by random permutation of pixel values. 

With this, we can extend the definition of spatial orderness to multiple spatial scales. For that, a scale-space like decomposition is constructed by averaging $a\times a$ non-overlapping input regions onto a single pixel value. We let these sets of new mean downsampled images be denoted as $I_1^{a},I_2^{a},....,I_k^{a}$. For each set of images at each scale, we denote their corresponding spatial orderness values by $so(I)^{a}$. At the end, we have a set of scalar values $so(I)^{1},so(I)^{2},...,so(I)^{p}$, which represent the spatial orderness of the data at various scales.
We summarize some the ways in which this measure can be interpreted:
\begin{itemize}
    \item Spatial orderness at the lowest scale is indicative of how much more accurately the value of a pixel can be interpolated from its neighbors \textit{than} its non-neighbors which are at a distance of 2 hops.  
    
    \item Randomly permuting the spatial locations of pixels (or blocks of pixels) will reduce spatial orderness. Conversely, when a randomly permuted version of an input has an equal likelihood of occurence to its non-permuted form, the spatial orderness of data is zero at all scales.
    \item 2D inputs of the form $I[i,j] = a(I) + \mathcal{N}(0,\sigma)$ ($a$ is a constant that can be different for different I), have zero spatial orderness. One can observe that random permutation of pixels does not change the image statistics in this case. Also, note that by controlling variation in $a(I)$ one can arbitrarily increase spatial "correlation" measures, hinting that correlation is not enough to capture spatial order in all cases. 
\end{itemize}

\section{Experiments}

The experiments reported herewith are conducted on the MNIST \cite{mnist_orig}, Fashion-MNIST \cite{fashion_mnist} and CIFAR-10 \cite{cifar_orig} datasets. For MNIST and Fashion-MNIST, we perform $2\times2$ max-pooling after each convolution layer, whereas for CIFAR-10, pooling was only performed after each alternate convolution layer. Additionally, a two layer fully connected network is used for CIFAR-10, whereas a single fc layer is used for MNIST and Fashion-MNIST. For consistency we used 64 units in all hidden layers, with kernels of size $3\times 3$, except in section \ref{sec:kerorderness_expt} (variation of kernel size).

\subsection{Disrupting Spatial Orderness: Random Block-Swapping}
Here we describe a method for disrupting the spatial orderness of the data, by performing block-swapping on the input. First we divide $(N\times N)$ images into blocks of size $k\times k$, such that $N/k \times N/k$ blocks span the entire image. Next, in each iteration of block swapping, a random chosen pair of image blocks are entirely swapped. We then repeat this process for  $Ns$ number of iterations. More swaps (larger $Ns$) will lead to a greater disruption of spatial order, and thus should elicit lower values of spatial orderness, and vice-versa. Furthermore, the block size ($k$) of the swap is relevant: swapping for a certain $k$ must not greatly impact the spatial orderness at scales less than $k$, as the spatial arrangement in those scales is not overly affected. \footnote{Block-swapping with larger block-size cannot altogether avoid disrupting the spatial order at lower scales, due to boundary effects of the blocks.}

\subsubsection{Random Block-Swapping: Impact on Spatial Orderness}
\label{sec:block_swap}

\begin{figure}
  \centering
    \includegraphics[width=0.8\textwidth]{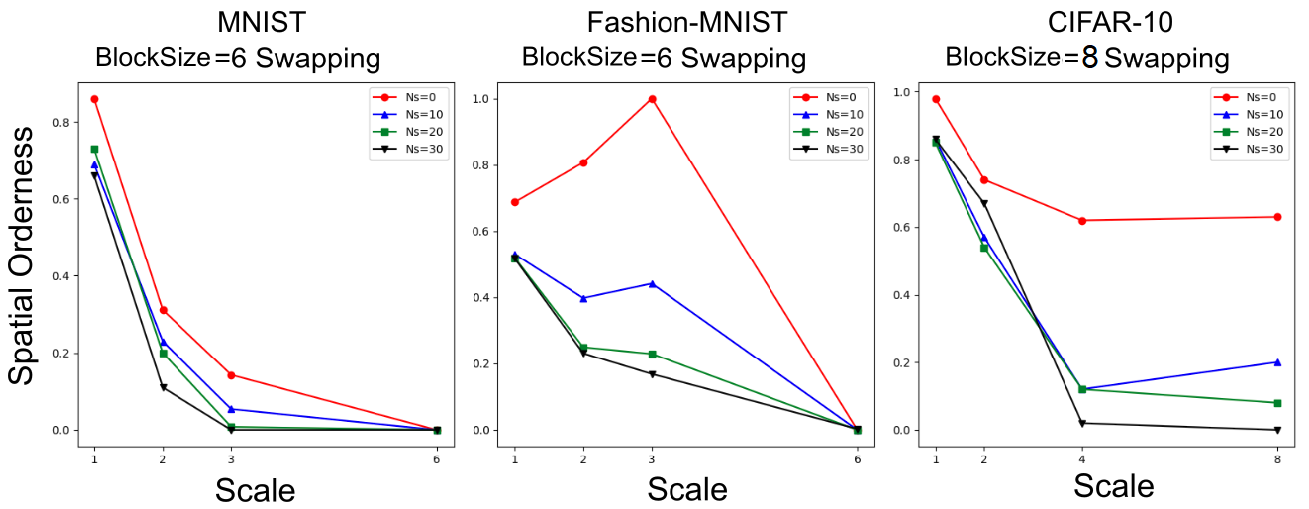}
    \caption{ Spatial orderness of the MNIST, Fashion-MNIST and CIFAR-10 datasets at various scales, and their changes with block-swap operations performed on the data. For instance, the plots in red showcase the spatial orderness of the original, unswapped datasets ($Ns=0$) at specific scales. For each dataset, block swapping was performed for a pre-selected block-size (specified on top). }  
    \label{fig:orderness_swap_all}
\end{figure}

To analyze the effect of block-swapping on spatial orderness measures at various scales, we simply vary the number of block-swap operations on each image of the corresponding datasets. Increasing the number of swaps leads to a steady reduction of spatial order as a whole, in the data. Therefore, a metric which measures spatial order must give smaller values when many block-swap operations are performed on the input. For our experiments, we choose four different number of block-swaps ($Ns=(0,10,20,30)$) for the datasets of MNIST (BlockSize=6), Fashion-MNIST (BlockSize=6) and CIFAR-10 (BlockSize=8), generating a total of 12 datasets: MNIST-swap$_{6}$(0,10,20,30), CIFAR10-swap$_{8}$(0,10,20,30) and  Fashion-MNIST-swap$_{6}$(0,10,20,30).  
The results are shown in figure \ref{fig:orderness_swap_all}.

First, we look at how spatial orderness changes with scale. As expected, we find that in all three datasets, spatial orderness at the highest scale is significantly lower than in the initial scales. This fact re-affirms the apparent "bag-of-words" like organisation of images at higher scales (objects or patterns are more positionally decorrelated at higher scales) (see \cite{bag_of_features_cnn}).

Next, we note the impact that block swapping has on spatial orderness of the corresponding scales. In all cases, we observe a clear reduction (to zero) of spatial orderness with a greater amount of block swaps, at the corresponding scales. Since block swapping is done with relatively larger block-size, the spatial orderness of the data for scales less than the block-size is not overly affected.

\subsubsection{Classification experiments: Is greater convolutional depth always better ?}\label{sec:swap_cnn_depth}

\begin{figure}
  \centering
    \includegraphics[width=0.9\textwidth]{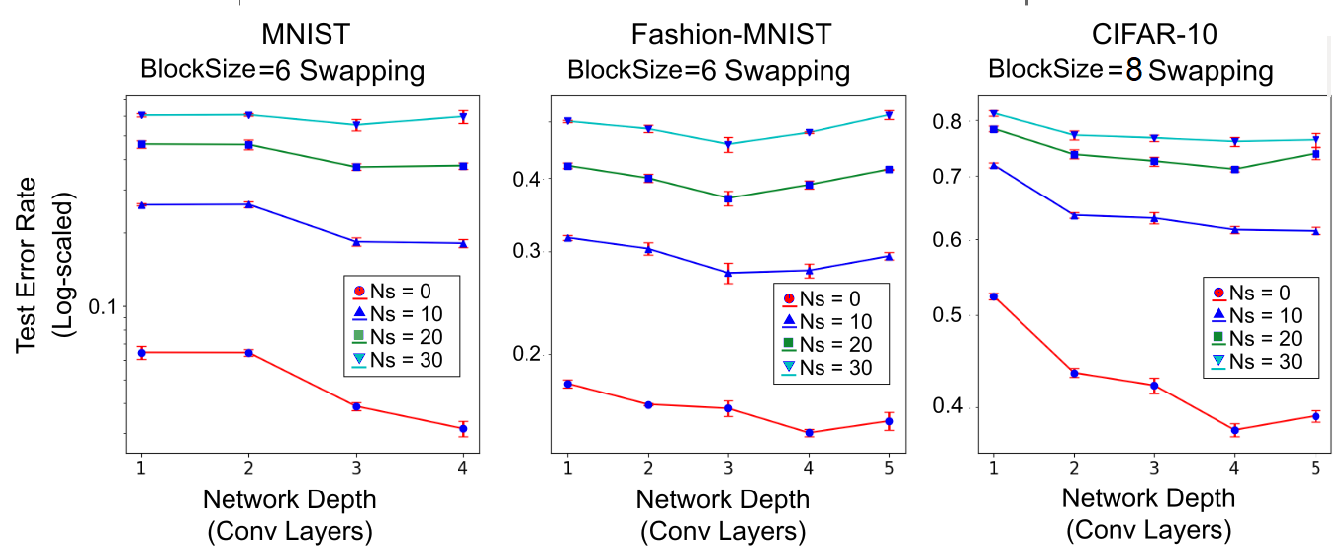}
    \caption{Semilog plots showing the test error rate of networks of different depths, trained on data corrupted by various degrees of spatial block-swapping ($Ns=(0,10,20,30)$) on three different datasets (MNIST, Fashion-MNIST and CIFAR-10). Note that for data lacking in spatial order ($Ns>0$), depth additions beyond a certain point do not yield improvements. Instead, such additions often significantly increase error rate, for larger $Ns$. }  
    \label{fig:cnn_swap_depth}
\end{figure}

Here we document CNN classification performance on MNIST-swap$_{6}$(0,10,20,30), CIFAR10-swap$_{8}$(0,10,20,30) and the Fashion-MNIST-swap$_{6}$(0,10,20,30) datasets. The objective of this experiment is to discover if adding convolutional layers to a CNN is still beneficial, when the spatial orderness of the data has been reduced at higher scales. Our primary hypothesis is that convolution layers exploit the spatial orderness of data at multiple scales. Hence, for block-swapped data, we must expect the addition of convolution layers (beyond the scale of the swap) to pay decreasing dividends. Furthermore, because the block-swaps are only done at a higher scale, we should still find that adding initial convolution layers are beneficial, as spatial orderness of initial scales are still preserved (Figure \ref{fig:orderness_swap_all}). 

 Results are shown in figure \ref{fig:cnn_swap_depth}. As hypothesized, we find that indeed adding convolution layers lead to decreasing gains, for larger number of block-swaps at the corresponding scales (larger $Ns$). Also, as anticipated, we observe that initial additions of convolution layers reduce test errors irrespective of block swapping.
 
 Our findings are consistent with \cite{deep_v_shallow}, where it was theoretically shown that stacked convolution layers are optimally priored for learning compositional functions. Greater convolutional depth implies greater compositionality which effectively spans a range of scales in the input. By removing spatial order at higher scales by random block-swapping, we are essentially disrupting the compositional structure of the data at higher scales, which leads to diminishing improvements with adding more depth.

\cmmnt{This experiment shows that depth increase is not as effective when spatial organization in the image is removed at certain scales.}

\subsection{Spatial Orderness of CNN Feature Maps}

 The previous sections demonstrate that convolutions are more effective when the input data has spatial order at multiple scales. However, note that a convolutional module at a depth of $n+1$ does not compute on the input data, but rather on the feature map of the $n^{th}$ layer. Since each convolution layer sees the feature map of the previous layers as its input, the computation of the spatial orderness of the feature maps would be a meaningful step. For each feature response map (denoted as a function  $f(.)$), computation of spatial orderness proceeds in the same way as for 2D images, treating the feature responses across all training examples as the set of 2D images $f(I_1),f(I_2),..f(I_k)$. As convolution outputs usually have multiple feature maps, the spatial orderness of a layer is estimated as the mean value of the spatial orderness of each individual feature map within that layer.


\subsubsection{MNIST and CIFAR-10: Training-time Progression}
\label{sec:training_so_progression}

 \begin{figure}
  \centering
    \includegraphics[width=0.9\textwidth]{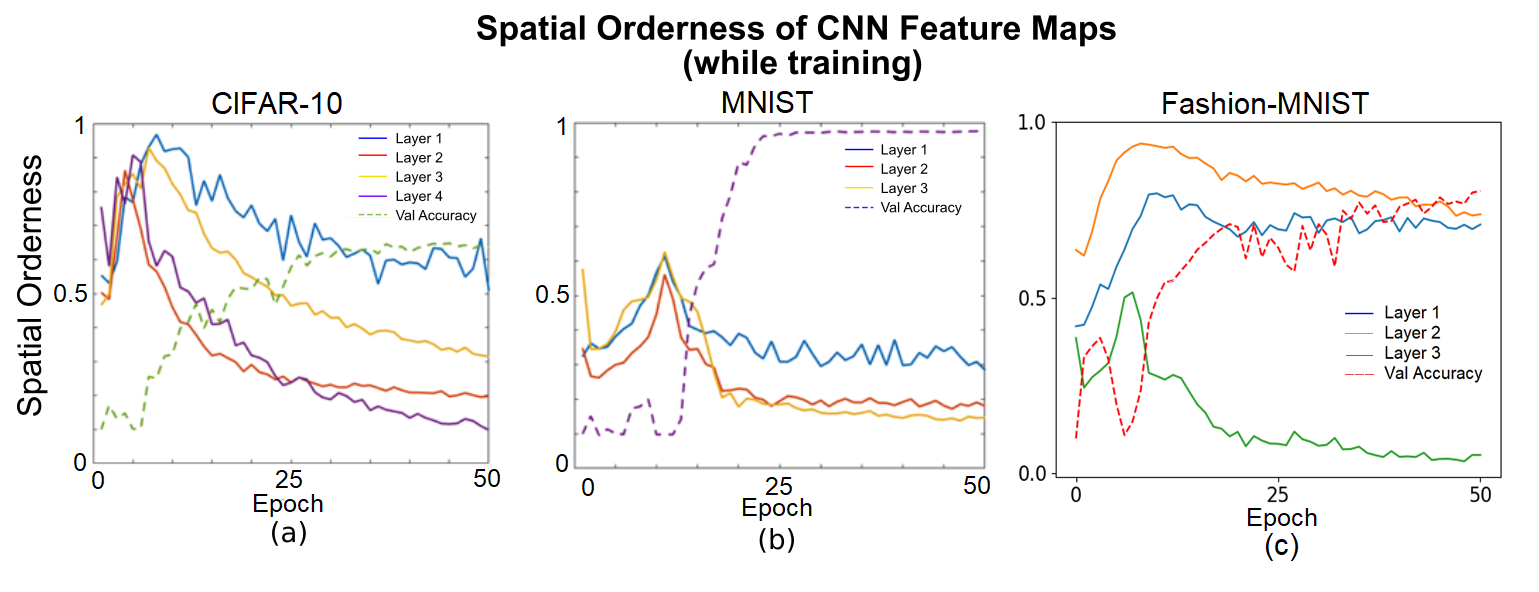}
    \caption{(a), (b) and (c) show the progression of mean spatial orderness of the feature maps (of each layer), while training on CIFAR-10, MNIST and Fashion-MNIST respectively. The validation performance at the end of each epoch is also shown (dotted lines).}  
    \label{fig:imness_seq_corr}
\end{figure}

CNNs are trained on the original datasets of MNIST (3 conv layers), Fashion-MNIST (3 conv layers) and CIFAR-10 (4 conv layers). During training, the validation accuracies and spatial orderness of feature maps at the end of each epoch was monitored. As our initial experiments in Section 
\ref{sec:block_swap} demonstrated, spatial orderness of images at highest scale is usually low, which indicates a bag-of-words like, spatially decorrelated organisation of information at higher scales. Hence the feature maps of CNN can be treated in the same way w.r.t their spatial orderness measures, i.e. low spatial orderness must signify higher levels of abstraction and spatially less redundant information. Concurrently, higher spatial orderness must signify a greater level of spatial redundancy in information, as it essentially implies that spatial gaps in feature values can be effectively "filled in" by means of interpolation using the feature values of the pixel neighbors. 

The spatial orderness progression of feature maps of all the CNN layers are shown in figure \ref{fig:imness_seq_corr} (additional plots available in supplementary material). For both datasets, we find that the spatial orderness measures show a synchronized "peaking" at the beginning of training. This is an unusual finding, as we should expect the features to get higher in abstraction with more training. Curiously, we do observe that the validation accuracy of the networks \textit{do not} show any steady improvement in this phase. Subsequently, after the peak, we find that the spatial orderness of all layers show a synchronized decrease. The spatial orderness of the last layer shows the greatest reduction after its peak; signifying that spatially de-correlated, abstract concepts are learned mainly in the higher layers. This finding is consistent with our current understanding of CNNs: generalization performance only starts improving when the higher layer representations start capturing increasingly abstract features (see particularly (c) in figure \ref{fig:imness_seq_corr}). Interestingly, we also notice that the spatial orderness of the other layers decrease with training as well, even including the first layer (blue trajectories). This indicates that while a CNN does eventually captures spatially non-redundant abstract concepts in its deeper layers, all other layers also sequentially strive to capture spatially non-redundant features of low spatial orderness.

\section{Spatial Orderness of Kernels} \label{sec:kernel_orderness}

\subsection{Theoretical Results} \label{sec:kerorderness_theory} 

We note that just like the inputs and the feature maps, one can treat the kernels (of size $K\times K$) as 2D images themselves. As such, it is also possible to compute the spatial orderness within the kernels, at the end of training. Convolution is linear in nature, and will elicit larger output responses when the input patches are highly correlated to the kernel form. Thus, kernels with very low spatial order are not likely to extract visually meaningful features, and vice-versa. Hence, from a feature extraction point of view, it is desirable that weights exhibit high spatial orderness.

Here we summarize our theoretical results on the spatial orderness of kernels. Please find our main theoretical results (Theorems 1, Corollaries 1.1 and 1.2) and proofs in the supplementary material. We summarize the theorems as follows. 
\begin{itemize}
    \item \textbf{Theorem 1 and Corollary 1.2}: \textbf{How is the spatial orderness of kernels and the spatial orderness of the feature map input related ?} We find that the spatial orderness of the kernels are likely to be higher when the inputs themselves have higher spatial orderness.
    \item \textbf{Corollary 1.1}: \textbf{How is the spatial orderness of kernels related to the choice of kernel size ?} We find that choosing a larger kernel size can lead to kernels with lower spatial orderness\footnote{Note that by "spatial orderness of kernels" we mean the average spatial orderness of post-trained kernel weights (averaged across all kernels within a layer). In the following section, we empirically substantiate the results in the theorems.}. This shows that the choice of kernel size is quite important w.r.t ensuring spatially ordered kernels.
\end{itemize}

\begin{figure}
  \centering
    \includegraphics[width=0.8\textwidth]{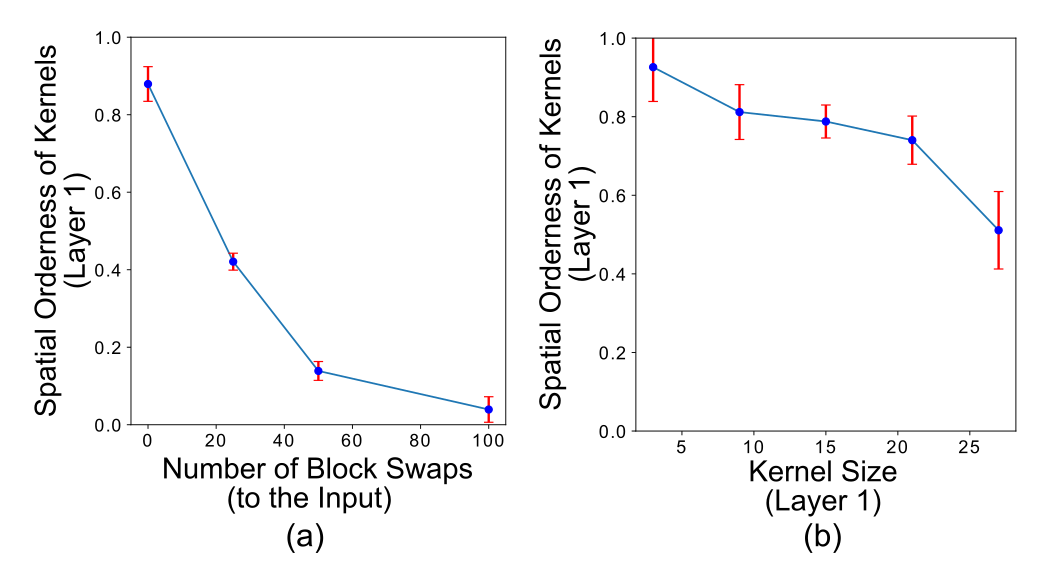}
    \caption{(a) demonstrates that disruption of spatial orderness at the input has an immediate effect on the spatial orderness of the kernels, and (b) shows that the size of kernels affect the spatial orderness of the trained kernels. All experiments were done on MNIST-1000 (1000 training examples used) and each experiment was repeated across six random splits of the data. 
    }
    \label{fig:bs_ks}
\end{figure}

\subsection{Experimental Validation of Results} \label{sec:kerorderness_expt}

A 3-layered CNN was trained on six random splits of the MNIST dataset, with random network intializations each time. Only 1000 examples were used for training. Two separate experiments were conducted, guided by the theoretical results: (a) MNIST-swap$_{3}$(0,25,50,100) datasets were generated from each random split on MNIST, on which the CNNs were trained, and (b) kernel size of the first layer was varied in the range (3,27) (for zero-padded convolutions), and the networks were trained on conventional MNIST. The objective of the first experiment was to examine the relationship between the spatial orderness of the input and the kernels; whereas the second experiment's goal was to discover how the spatial orderness of the kernels was dependent on the choice of kernel size. Note that for the second experiment, the convolution padding of the first layer was zero; i.e. for large kernel sizes the convolution effectively becomes a fully connected layer.

The results are shown in the plots of Figure \ref{fig:bs_ks}. We find that they conform to the predictions of our theoretical results. We observe a sharp reduction in the spatial orderness of kernels when the input data has less spatial order (using greater block-swaps). Furthermore, we find that the kernel spatial orderness reduces with larger kernel size, almost by a factor of half (from K=3 to K=27). Note that in the second experiment, no block-swapping was performed on the input. 

These results add an interesting perspective on the debate of CNNs versus FC-NNs.
Taken together, the results imply that a CNN is more likely to extract visually meaningful (spatially ordered) features, subject to two necessary conditions: (a) the kernel size of the convolutions are small (i.e. more CNN than FC-NN like) and (b) the data on which the network is trained exhibits high spatial orderness.

\section{Discussions: Connection to Other Works}

Recently it was found that on Imagenet, a bag-of-features based approach with shallow CNNs performs surprisingly close to bigger models which exploit spatial structure at higher scales \cite{bag_of_features_cnn}. Hence, spatial arrangement information beyond a certain scale is not very yielding in terms of improving classification performance. This is consistent with our findings in this paper. As demonstrated in section \ref{sec:block_swap}, the spatial orderness of the image at higher scales is usually lower than the spatial orderness at lower scales, i.e. the information at the higher scales is more "bag-of-words" like than in lower scales. Furthermore, experiments in section \ref{sec:training_so_progression} and section 3 of the supplementary material reveal that deeper convolutional feature maps indeed mirror the spatial organization of the input at higher scales.

Another example of testing the generalization abilities of CNNs on the data is in \cite{statistical_regularity_bengio}. The authors observe that the CNN fails to generalize well when recognizability-preserving fourier domain filter masks were applied to the input. Three such data distortions were explored: no filtering, low-pass radial mask and random mask. Throughout their experiments, the authors observe that the CNN trained on the low pass filtered radially-masked inputs showed the most consistent performance across datasets, having the smallest generalization gap (among the no data-augmentation schemes). Our analysis on the spatial orderness of kernels in section \ref{sec:kernel_orderness} provides a possible explanation for this observation. Low-pass filtering enhances the spatial orderness of the input, compared to the random mask and unfiltered schemes. By doing so, it also ensures that the kernels within the architecture show greater spatial orderness; which ensures more consistent performance across data distortion variations.

\section{Conclusions}

A new statistical measure for quantifying spatial order within 2D data at various scales was proposed, called spatial orderness. This measure was shown to be indicative of the spatial organization at various scales, decreasing in value in correlation to the amount of block-swapping performed on the input. The performance gains from adding convolution layers was demonstrated to weaken with greater block-swapping disruption. Interesting bi-phasic trend in the spatial orderness of feature maps was observed during training. Theoretical and empirical results demonstrated the correlation between the spatial orderness of trained kernels, and the spatial orderness of the input. Additionally, we find that spatial orderness of kernels shows a significant drop with greater kernel-size, as it approaches a FC-NN like configuration. 

\section*{Acknowledgments}
This research was supported by DSO National Laboratories, Singapore (grant no. R-719-000-029-592). We thank Dr. Loo Nin Teow and Dr. How Khee Yin for helpful discussions.

\bibliography{egbib}

\appendix

\section{Theorems} 
Please refer to section 5 in the main paper for the implications of the theorems, and empirical validation of the theoretical results.
\begin{theorem}
\label{thm1} 
Consider a layer within a Convolutional Neural Network trained by backpropagation, which contains kernel of size $(K \times K)$, s.t. $K \geq 3$. After training, let $w(p),w(q),w(r)$ be extracted tuples of kernel weight values from a particular instance of the kernel $W$, for locations $(p,q,r)$ within the kernel, following the two-hop spatial arrangement. Let the input feature map for $W$ be denoted as $X$. Consider all spatial locations within the input feature map $a$, $b$ and $c$, which follow the same two hop spatial arrangement as $p,q,r$.  
Then we must have, 
\begin{align*}
\underset{p,q}{\mathbb{E}}\left[\lvert w(p)-w(q)   \rvert ^2\right] &\leq \alpha \underset{a,b,n}{\mathbb{E}} \left [ \lvert X_n(a)-X_n(b)   \rvert ^2 \right ] , \  \ and \\ 
 \underset{p,r}{\mathbb{E}} \left[ \lvert w(p)-w(r)   \rvert ^2 \right] &\leq \alpha (1+so(X)^1) \underset{a,b,n}{\mathbb{E}} \left [ \lvert X_n(a)-X_n(b)    \rvert ^2 \right] , \,
\end{align*}
for a certain non-zero valued $\alpha$. The expectation on the left of the inequality is taken over all possible $p,q,r$ locations within the kernel, which obey the two-hop spatial arrangement. $so(X)^1$ as usual denotes spatial orderness of the input feature map at the first scale, averaged across all examples of the feature map $X_1,X_2,..X_N$.
\end{theorem}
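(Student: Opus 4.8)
\section{Proof proposal for Theorem 1}

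The plan is to exploit the fact that, for a kernel trained by backpropagation from a small (effectively zero) initialization, each post-trained weight can be written as an accumulation of gradient contributions, which are themselves error-weighted correlations of the backpropagated signal with shifted copies of the input feature map. Writing the convolution output for example $n$ at output location $s$ as $Y_n(s)=\sum_u w(u)X_n(s+u)$ and letting $\delta_n^{(t)}(s)=\partial L/\partial Y_n(s)$ denote the error at iteration $t$, the gradient-descent updates accumulate to
$$w(u) \;=\; \eta\sum_t\sum_n\sum_s \delta_n^{(t)}(s)\,X_n(s+u) \;=\; \sum_{n,s} c_{n,s}\,X_n(s+u),$$
where $c_{n,s}=\eta\sum_t\delta_n^{(t)}(s)$ collects all error contributions and is \emph{independent} of the kernel location $u$. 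For the first layer the input $X$ is genuinely fixed; for deeper layers one treats $X$ as approximately stationary during training, which is the modelling assumption I would flag explicitly.

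First I would form the weight difference for a one-hop pair $(p,q)$,
$$w(p)-w(q)=\sum_{n,s} c_{n,s}\big(X_n(s+p)-X_n(s+q)\big),$$
and apply Cauchy--Schwarz to separate the error coefficients from the input differences,
$$\lvert w(p)-w(q)\rvert^2 \;\le\; \Big(\sum_{n,s} c_{n,s}^2\Big)\Big(\sum_{n,s}\lvert X_n(s+p)-X_n(s+q)\rvert^2\Big).$$
Setting $\alpha'=\sum_{n,s}c_{n,s}^2$, a single constant shared by both inequalities, I would then average over all admissible kernel pairs at one-hop separation. The key observation is that, as $s$ ranges over output positions and $(p,q)$ ranges over one-hop kernel pairs, the shifted pairs $(s+p,s+q)$ sweep out the one-hop pairs of the input feature map; hence the averaged right-hand side is proportional to $\mathbb{E}_{a,b,n}[\lvert X_n(a)-X_n(b)\rvert^2]$, with proportionality constant equal to the number of examples times the number of output positions. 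Folding this count into $\alpha$ yields the first inequality.

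For the second inequality I would repeat the identical argument with $q$ replaced by the two-hop location $r$, so that the shifted pairs $(s+p,s+r)$ now sweep out the two-hop pairs of the input. The averaged input term then becomes $\mathbb{E}_{a,c,n}[\lvert X_n(a)-X_n(c)\rvert^2]$, and invoking the defining relation of spatial orderness, namely $\mathbb{E}[(X(a)-X(c))^2]=(1+so(X)^1)\,\mathbb{E}[(X(a)-X(b))^2]$, converts this into $(1+so(X)^1)\,\mathbb{E}_{a,b,n}[\lvert X_n(a)-X_n(b)\rvert^2]$. Since $\alpha'$ and the output/example counts are unchanged, the same $\alpha$ appears in both bounds, exactly as the statement requires.

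The main obstacle I anticipate is justifying the linear accumulation representation $w(u)=\sum_{n,s}c_{n,s}X_n(s+u)$: it is exact only under zero initialization and a fixed input, whereas for intermediate layers the feature map $X$ co-evolves with the weights. I would address this by either restricting the clean statement to the first layer, or by arguing that the dominant contribution to the trained weights comes from the error-weighted input correlations so that the representation holds approximately; the remaining steps (Cauchy--Schwarz plus the orderness identity) are then essentially bookkeeping. A secondary subtlety is the boundary effect in identifying the shifted pairs $(s+p,s+q)$ with a uniform sample of input one-hop (resp. two-hop) pairs, which I would handle by taking the empirical expectation defining $so(X)^1$ over the same induced sampling, making the identification exact by construction.
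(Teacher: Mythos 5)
Your proposal follows essentially the same route as the paper's own proof: represent each trained weight as an error-weighted linear accumulation of input values ($w(p)=\sum_{n,a}C_nX_n(a)$ in the paper's notation), apply Cauchy--Schwarz to separate the error coefficients from the input differences, and then invoke the orderness identity $\mathbb{E}\left[(X(a)-X(c))^2\right]=(1+so(X)^1)\,\mathbb{E}\left[(X(a)-X(b))^2\right]$ to obtain the two-hop bound with the same $\alpha$. If anything, you are more explicit than the paper about the modelling assumptions this requires (effectively zero initialization, treating deeper-layer feature maps as stationary during training, and boundary effects in the pair sampling), which the paper's proof uses implicitly without comment.
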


\begin{proof}
Let $X_{rect}$  be any randomly located rectangular region of size $(K\times K)$ within the $i^{th}$ feature map which is the input to $W$, and the corresponding scalar output node denoted by $o_k$, which belongs to the $j^{th}$ feature map in the next layer. The weight update rule for the kernel $W$, only pertaining to the backpropgation error signal at $o_k$, at epoch $t$, for input $X_n$ is 
\begin{equation} 
\Delta W = -\eta(t)X_{rect}\delta_k^n.
\end{equation}

Here $\delta_k^n$ is the backprop error signal at $o_k$ and $\eta(t)$ is the gradient descent update rate. Let us denote the corresponding nodes within $X_{rect}$ by ($X_n(a)$,$X_n(b)$,$X_n(c)$) which are respectively attached to ($w(p)$, $w(q)$, $w(r)$). Note that the spatial relationship between the feature map locations $a$, $b$ and $c$ is the same as between $p$, $q$ and $r$, i.e. both are 2-hop arrangements in their respective domains. The above update rule only considers a single ouptut node for the update. If one were to consider all updates across all examples and outputs, then the final kernel value for $w(p)$ obtained after $T$ epochs can be simplified as, 
\begin{equation} \label{eq:weight_update}
    w(p) = \sum_{t,n,a}-\eta(t)\delta^n_kX_n(a) = \sum_{n,a}C_nX_n(a),
\end{equation}
where $C_n=\sum_{t,k}-\eta(t)\delta^n_k$. Similarly to equation \ref{eq:weight_update}, one can obtain forms for $w(q)$ and $w(r)$. Finally we have, 
\begin{align}
\underset{p,q}{\mathbb{E}} \left[\lvert w(p)-w(q)   \rvert ^2 \right] &= \underset{a,b,n}{\mathbb{E}}\left[( \sum_{n}C_n(X_n(a)-X_n(b)) ) ^2 \right]\\ 
&\leq   \underset{a,b,n}{\mathbb{E}}\left[\sum_{n}\left (C_n \right ) ^2  \sum_{n}\left (X_n(a)-X_n(b) \right )^2 \right] \leq \alpha \underset{a,b,n}{\mathbb{E}} \left [ \lvert X_n(a)-X_n(b)   \rvert ^2 \right ], \label{eq:ker_diffs}
\end{align}
where $\alpha =\sum_{n}\left (C_n \right ) ^2N_{total} $. Here $N_{total}$ is the total number of backpropgation driven updates on the kernel values. Similarly, for the locations $p$ and $r$ which are at a 2-hop distance, we have
\begin{equation} \label{eq:two_hop_bound}
    \underset{p,r}{\mathbb{E}}\left[ \lvert w(p)-w(r)   \rvert ^2 \right] \leq \alpha \underset{a,c,n}{\mathbb{E}} \left [ \lvert X_n(a)-X_n(c)   \rvert ^2 \right ] \leq \alpha (1+so(X)^1) \underset{a,b,n}{\mathbb{E}} \left [ \lvert X_n(a)-X_n(b)    \rvert ^2 \right],
\end{equation}
where $so(X)^1$ is the spatial orderness of the input feature map at the first scale. This completes the proof. 
\end{proof}

\begin{corollary}
\label{thm2} 
Consider a trained CNN with the same setup as in Theorem \ref{thm1}, with the same symbol definitions. Here, we vary the size of the kernel $K\times K$, with all other network parameters unchanged. For simplicity, we consider the kernels from the first layer. Thus, now each $X_i$ is simply the 2D inputs to the CNN, of size $S\times S$. We consider zero-padded convolutions in the first layer. This ensures that when $K=S$, the convolution layer simply becomes a fully connected layer. We define

\begin{equation}
D_{ab} = \underset{a,b,n}{\mathbb{E}} \left [ \lvert X_n(a)-X_n(b)   \rvert ^2 \right ] .
\end{equation} 
Let us also denote gaussian random variables
\begin{equation} 
 \epsilon_1 \sim \mathcal{N}\left(0,\frac{\sigma_1^2}{(S-K+1)^2N}\right) \  and  \ \epsilon_2 \sim \mathcal{N}\left(0,\frac{\sigma_2^2}{(S-K+1)^2N}\right),
 \end{equation}
 for certain non-zero real constants $\sigma_1$ and $\sigma_2$. 
 We wish to compute the uncertainty in the upper bounds of the kernel value differences. It follows that  

\begin{align*}
\lvert w(p)-w(q)   \rvert ^2 &\leq \alpha(K)(D_{ab}+\epsilon_1), \  \ and \\ 
 \lvert w(p)-w(r)   \rvert ^2  &\leq  \alpha(K)\left(1+so(X)^1 + \frac{\epsilon_2}{D_{ab}} \right)D_{ab}, \,
\end{align*}
for a certain non-zero valued $\alpha$(K), which is only a function of the kernel size $K$. Here $p,q,r$ are fixed kernel locations which follow the 2-hop arrangement. 
\end{corollary}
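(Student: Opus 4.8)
The plan is to re-run the Cauchy--Schwarz argument of Theorem~\ref{thm1}, but with two changes: keep the kernel locations $p,q,r$ \emph{fixed} (rather than averaging over all such triples inside the kernel), and replace the idealized population expectation $D_{ab}$ by the \emph{empirical} average that the finite training run actually sees, tracking the gap between the two as a Gaussian fluctuation. First I would start from the weight-update representation $w(p)=\sum_{n,a}C_nX_n(a)$ established in equation~\ref{eq:weight_update}. Since we are in the first layer, each $X_n$ is an input image of size $S\times S$ and, under the stated convolution, the number of spatial positions at which the kernel is applied is $(S-K+1)^2$ per example; across $N$ examples this gives $M:=(S-K+1)^2N$ position/example pairs, which is exactly the sample count available for estimating any data statistic such as $D_{ab}$.

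For the fixed pair $p,q$ attached to the input locations $a,b$, Cauchy--Schwarz gives
\[
\lvert w(p)-w(q)\rvert^2=\Big(\sum_n C_n\big(X_n(a)-X_n(b)\big)\Big)^2\le\Big(\sum_n C_n^2\Big)\sum_{n,a}\big(X_n(a)-X_n(b)\big)^2,
\]
exactly as in equation~\ref{eq:ker_diffs}, but without the outer expectation over kernel locations. I would write the last sum as $M\,\widehat{D}_{ab}$, where $\widehat{D}_{ab}=\tfrac{1}{M}\sum_{n,a}(X_n(a)-X_n(b))^2$ is the empirical mean over the $M$ sampled pairs, and absorb the prefactor $M\sum_n C_n^2$ into a constant $\alpha(K)$. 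The estimator $\widehat{D}_{ab}$ is unbiased for $D_{ab}=\mathbb{E}_{a,b,n}[\lvert X_n(a)-X_n(b)\rvert^2]$, so the central limit theorem yields $\widehat{D}_{ab}=D_{ab}+\epsilon_1$ with $\epsilon_1$ approximately Gaussian of mean zero and variance $\sigma_1^2/M=\sigma_1^2/((S-K+1)^2N)$, where $\sigma_1^2$ is the per-term variance of $(X_n(a)-X_n(b))^2$. This reproduces the first claimed inequality $\lvert w(p)-w(q)\rvert^2\le\alpha(K)(D_{ab}+\epsilon_1)$, with $\alpha(K)$ a function of $K$ alone because both $M$ and the aggregated learning signal $\sum_n C_n^2$ are fixed once the kernel size is fixed.

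The two-hop pair $p,r$ (attached to $a,c$) is handled identically: its empirical average $\widehat{D}_{ac}=D_{ac}+\epsilon_2$, and by the definition of first-scale spatial orderness $so(X)^1=D_{ac}/D_{ab}-1$ (equivalently the relation in equation~\ref{eq:orderness}) we have $D_{ac}=(1+so(X)^1)D_{ab}$. Substituting and factoring out $D_{ab}$ turns $\alpha(K)\widehat{D}_{ac}$ into $\alpha(K)\big(1+so(X)^1+\epsilon_2/D_{ab}\big)D_{ab}$, which is the second claimed bound, with $\epsilon_2$ of variance $\sigma_2^2/((S-K+1)^2N)$. The dependence of the fluctuation variance on $(S-K+1)^{-2}$ is the payoff: as $K\to S$ the number of convolution positions collapses toward one, the estimator of $D_{ab}$ becomes noisy, and the upper bound on the kernel differences loosens --- this is the quantitative mechanism behind the corollary's claim that larger kernels tend to yield lower spatial orderness.

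The main obstacle I anticipate is justifying the $1/M$ scaling of the fluctuation variance. The $M$ summands are \emph{not} independent: neighbouring convolution positions have heavily overlapping receptive fields, so the terms $(X_n(a)-X_n(b))^2$ are spatially correlated within each image, and the true effective sample size is smaller than $(S-K+1)^2N$. Making the CLT rigorous would require either a mixing / weak-dependence assumption on the spatial process or an explicit accounting of the correlation length, so the clean denominator $(S-K+1)^2N$ is best read as the idealized independent-sample count. A secondary subtlety is arguing that $\alpha(K)$ depends on $K$ \emph{only} and not implicitly on the data through $C_n$; this requires treating the accumulated backpropagation signals $C_n$ as a data-independent aggregate, which is the same simplification already adopted in the proof of Theorem~\ref{thm1}.
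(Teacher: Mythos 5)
Your proposal is correct and follows essentially the same route as the paper's own proof: both drop the outer expectation over kernel locations from the Cauchy--Schwarz bound of Theorem~\ref{thm1}, identify the number of weight updates per kernel element as $(S-K+1)^2N$ for zero-padded first-layer convolutions, model the gap between the empirical average and $D_{ab}$ as a Gaussian fluctuation with variance scaling as the reciprocal of that count, and convert the two-hop term via $D_{ac}=(1+so(X)^1)D_{ab}$. Your closing caveats (the summands' spatial dependence undermining the clean $1/M$ variance, and the data-dependence hidden in $\alpha(K)$ through $C_n$) are limitations the paper's proof shares but does not acknowledge.
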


\begin{proof}
The main observation required to prove this result is that the number of updates of each kernel value is dependent on the size of the kernel, given that there are a fixed number of training examples. More precisely, for a kernel of size $K\times K$, and an input of size $S\times S$, the total number of updates, $N_{total}$, to each element of a kernel, is proportional to $(S-K+1)^2N$ (for zero-padded convolutions), s.t. one can write $N_{total}=\beta (S-K+1)^2N$. We refer the reader to equation \ref{eq:ker_diffs} from Theorem \ref{thm1}. By removing the expectation operator under $p,q$, we can reformulate the inequality as

\begin{align}
    \lvert w(p) - w(q) \rvert^2 &\leq  \left(\sum_{n}\left (C_n \right ) ^2N_{total}\right) \frac{\sum_{n=1}^{N_{total}} \lvert X_n(a) - X_n(b) \rvert^2}{N_{total}} \\ 
      &\leq \alpha(K) \left(\underset{a,b,n}{\mathbb{E}} \left [ \lvert X_n(a)-X_n(b)   \rvert ^2 \right ] + \mathcal{N}\left(0,\frac{\sigma_1'^2}{N_{total}}\right)\right) \\ 
    &\leq\alpha(K) \left( D_{ab} + \mathcal{N}\left(0,\frac{\sigma_1^2}{(S-K+1)^2N}\right)\right).
\end{align}
Here, $\sigma_1'^2$ represents the uncertainty involved in the computation of  $\underset{a,b,n}{\mathbb{E}} \left [ \lvert X_n(a)-X_n(b)   \rvert ^2 \right ]$, for $N_{total}=1$. Also, $\sigma_1^2 = \sigma_1'^2/\beta$. Similarly, we can reformulate equation \ref{eq:two_hop_bound}, to determine the upper bound on the two-hop kernel value difference as 

\begin{align}
    \lvert w(p) - w(r) \rvert^2 &\leq  \left(\sum_{n}\left (C_n \right ) ^2N_{total}\right) \frac{\sum_{n=1}^{N_{total}} \lvert X_n(a) - X_n(c) \rvert^2}{N_{total}}  \\
    &\leq\alpha(K) \left(\underset{a,b,n}{\mathbb{E}} \left [ \lvert X_n(a)-X_n(c)   \rvert ^2 \right ] + \mathcal{N}\left(0,\frac{\sigma_2^2}{(S-K+1)^2N}\right)\right) \\ 
      &\leq \alpha(K) \left(\left(1+so(X)^1\right)\underset{a,b,n}{\mathbb{E}} \left [ \lvert X_n(a)-X_n(b)   \rvert ^2 \right ] + \mathcal{N}\left(0,\frac{\sigma_2^2}{(S-K+1)^2N}\right)\right) \\ 
      &\leq  \alpha(K)\left(1+so(X)^1 + \frac{\epsilon_2}{D_{ab}} \right)D_{ab}.
\end{align}
The definitions of $\sigma_2^2$ and $\sigma_2'^2$ are analogous to $\sigma_1^2$ and $\sigma_1'^2$ before. This completes the proof. 
\end{proof}

\begin{corollary}
Consider a CNN trained with the same setup and symbol definitions as in Theorem \ref{thm1}. We bound kernel value differences averages across regions of size $d \times d$. Let us denote non-overlapping set of kernel locations $p_d$,$q_d$ and $r_d$, each of size $d\times d$, which follow the two hop spatial arrangement. Similarly we denote non-overlapping set of feature map locations in $X$ by $a_d$,$b_d$ and $c_d$, each of size $d\times d$, which follow the two hop spatial arrangement. It can then be shown that, 
\begin{align*}
\underset{p,q}{\mathbb{E}}\left[\left (  \underset{p\in p_d}{\mathbb{E}}\left[w(p)\right]-\underset{q\in q_d}{\mathbb{E}}\left[w(q)\right]   \right )  ^2\right] &\leq \alpha \underset{a,b,n}{\mathbb{E}} \left [ \left (  \underset{a\in a_d}{\mathbb{E}}\left[X_n(a)\right]-\underset{b\in b_d}{\mathbb{E}}\left[X_n(b)\right]   \right ) ^2 \right ] , \  \ and \\ 
 \underset{p,r}{\mathbb{E}} \left[ \left ( \underset{p\in p_d}{\mathbb{E}}\left[w(p)\right]-\underset{r\in r_d}{\mathbb{E}}\left[w(r)\right]   \right)  ^2 \right] &\leq \alpha (1+so(X)^d) \underset{a,b,n}{\mathbb{E}} \left [ \left (  \underset{a\in a_d}{\mathbb{E}}\left[X_n(a)\right]-\underset{b\in b_d}{\mathbb{E}}\left[X_n(b)\right]   \right )  ^2 \right]. \,
\end{align*}

Note that the above is simply a generalization of Theorem \ref{thm1} ($c=1$) to arbitrary scales. 
\label{crl2} 

\end{corollary}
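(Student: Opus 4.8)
The plan is to mirror the proof of Theorem \ref{thm1} almost verbatim, replacing every pointwise kernel and feature value by its block average over a $d \times d$ region. The starting point is the linear weight-update representation from equation \ref{eq:weight_update}, namely $w(p) = \sum_{n,a} C_n X_n(a)$ with $C_n = \sum_{t,k} -\eta(t)\delta_k^n$. The single most important observation I would emphasize is that the coefficient $C_n$ depends only on the training example $n$ (through the accumulated error signals and learning rates), and \emph{not} on the kernel position $p$: shifting the kernel index merely reindexes which input location $a$ is multiplied by $C_n$. This position-independence is exactly what lets a block average commute with the update representation.

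First I would define the block averages $\bar{w}_{p_d} = \mathbb{E}_{p\in p_d}[w(p)]$ and $\bar{X}_{n,a_d} = \mathbb{E}_{a\in a_d}[X_n(a)]$, and argue that, because $C_n$ is shared across all positions within a block, linearity yields $\bar{w}_{p_d} - \bar{w}_{q_d} = \sum_n C_n (\bar{X}_{n,a_d} - \bar{X}_{n,b_d})$, where the feature-map block $a_d$ (resp. $b_d$) is the set of input locations connected to the kernel block $p_d$ (resp. $q_d$). Here I must check that averaging over the kernel block $p_d$ induces averaging over precisely the corresponding feature-map block $a_d$ at the same scale; this follows from the fixed convolutional connectivity and is guaranteed by the hypothesis that $p_d,q_d,r_d$ and $a_d,b_d,c_d$ obey matching two-hop arrangements.

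With this representation in hand, the first inequality follows by applying Cauchy--Schwarz in exactly the form used in equation \ref{eq:ker_diffs}, $(\sum_n C_n z_n)^2 \leq (\sum_n C_n^2)(\sum_n z_n^2)$ with $z_n = \bar{X}_{n,a_d} - \bar{X}_{n,b_d}$, then taking the expectation over $(p_d,q_d)$ (and the matching $(a_d,b_d)$) and over $n$, absorbing $\sum_n C_n^2$ and $N_{total}$ into the constant $\alpha$. For the two-hop inequality I would invoke the scale-$d$ spatial orderness directly: since the block averages $\bar{X}_{n,a_d}$ are precisely the $d\times d$ mean-downsampled feature values entering the definition of $so(X)^d$, the defining identity gives $\mathbb{E}_{a,c,n}[(\bar{X}_{n,a_d} - \bar{X}_{n,c_d})^2] = (1 + so(X)^d)\,\mathbb{E}_{a,b,n}[(\bar{X}_{n,a_d} - \bar{X}_{n,b_d})^2]$, and substituting this after the Cauchy--Schwarz step yields the stated bound with the $(1 + so(X)^d)$ factor.

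The routine computations (Cauchy--Schwarz and the collection of constants) are identical to Theorem \ref{thm1}, so the only genuine obstacle is the bookkeeping of the second paragraph: verifying that block-averaging the kernel weights is equivalent to block-averaging the connected feature-map inputs with the \emph{same} example-indexed coefficients $C_n$. Once the position-independence of $C_n$ and the block-to-block convolutional correspondence are established, the remainder is a scale-$d$ replica of the first-scale argument, recovering Theorem \ref{thm1} at $d=1$.
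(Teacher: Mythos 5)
Your proposal is correct and takes essentially the same route as the paper's own proof: extend the linear update representation $w(p)=\sum_{n,a}C_nX_n(a)$ to $d\times d$ block averages (valid because $C_n$ is independent of kernel position), apply Cauchy--Schwarz exactly as in equation \ref{eq:ker_diffs}, and then use that the block averages are the mean-downsampled values appearing in the definition of $so(X)^d$ to obtain the $(1+so(X)^d)$ factor. If anything, your writeup is more complete than the paper's, which leaves the position-independence of $C_n$ and the identification of block averages with the scale-$d$ downsampling implicit.
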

\begin{proof} 
Note the final mathematical expression for $w(p)$ in equation \ref{eq:weight_update}. Extending that equation to the average of kernel values across a region of size $d\times d$, $\underset{p\in p_d}{\mathbb{E}}\left[w(q)\right]$, we have

\begin{equation}
        \underset{p\in p_d}{\mathbb{E}}\left[w(q)\right] = \sum_{t,n,a}-\eta(t)\delta^n_k \underset{a\in a_d}{\mathbb{E}}\left[X_n(a)\right] = \sum_{n,a}C_n\underset{a\in a_d}{\mathbb{E}}\left[X_n(a)\right].
\end{equation}
Subsequently, after a trivial application of Cauchy-Schwarz inequality to the expression $\underset{p,q}{\mathbb{E}}\left[\left (  \underset{p\in p_d}{\mathbb{E}}\left[w(p)\right]-\underset{q\in q_d}{\mathbb{E}}\left[w(q)\right]   \right )  ^2\right]$, similar to equation \ref{eq:ker_diffs}, the results follow. 
\end{proof}

\section{Feature Map Spatial Orderness: Progression during Training}
\label{sec:intro}
Figures 1 and 2 contains plots which depict the progression average spatial orderness of feature maps while training, for different network architecture choices. In Figure 1, the learning rate was decreased 10-fold, such that the co-occurence of phase (b) and validation accuracy increase can be more precisely observed.  

\begin{figure}
  \centering
    \includegraphics[width=0.7\textwidth]{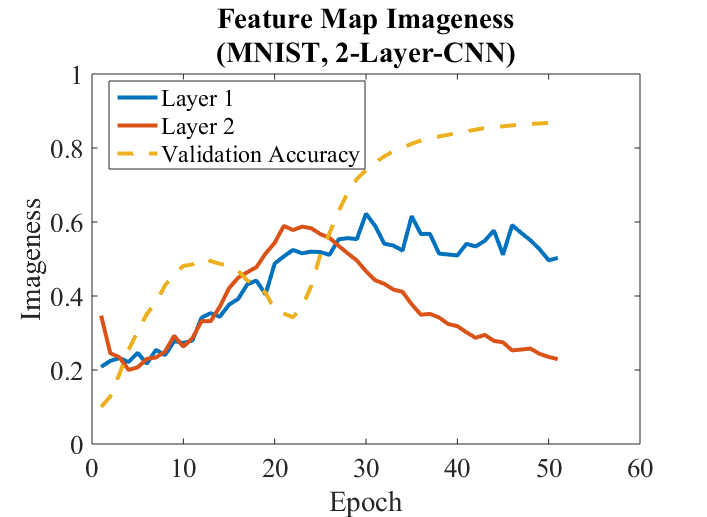}
    \caption{A CNN with two convolutional layers of $3x3$ was trained on MNIST, with a small learning rate of 0.0001. Shown are the average spatial orderness of feature maps at the end of each epoch of training, across 50 epochs. Also shown is the validation accuracy of the network at the end of each epoch. As has been observed with previous plots, the validation accuracy starts an unconstrained jump. only after the spatial orderness of the final layer (here Layer 2) starts decreasing.}  
    \label{fig:imness_seq_corr}
\end{figure}

\begin{figure}
  \centering
    \includegraphics[width=0.7\textwidth]{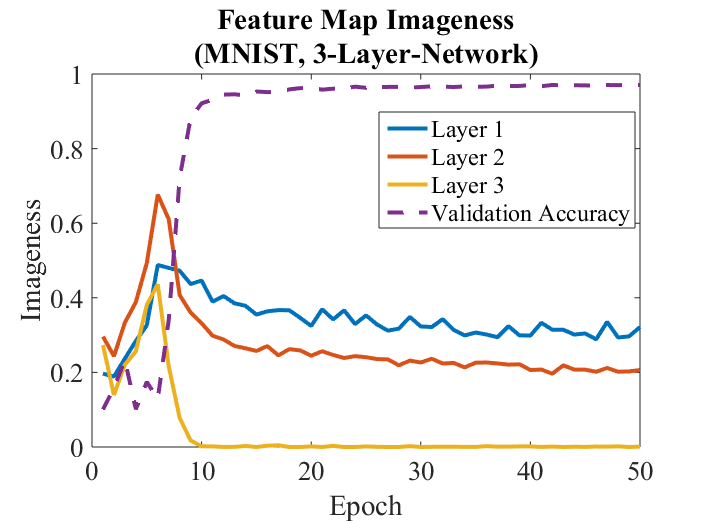}
    \caption{A CNN with three convolutional layers of $3x3$ was trained on MNIST with a learning rate of 0.001. The average spatial orderness of feature maps, along with the validation accuracy of various layers are shown across 50 epochs. }  
    \label{fig:imness_seq_corr}
\end{figure}

\section{Correlations between Spatial Orderness of Inputs and Feature Maps}

We wish to see whether any correspondence exists between the spatial orderness of the input $I_1,I_2,..I_k$ at various scales, and the spatial orderness of the feature maps at various depths. We train a CNN on the MNIST-swap(0,10,30,60) datasets, in which after every $3\times3$ convolution, a $2\times2$ max pooling operation is performed. Note that "effective" receptive field size of each position within a feature map is dependent on the depth of the feature map. Also note that max-pooling operations reduce the overlap between neighboring feature map units. These two aspects combined hint that the spatial orderness of the feature map at depth $k$ could be related to the spatial orderness of the input at a scale of $2k$. The plot in figure \ref{fig:imness_seq_corr} shows the post-training spatial orderness of feature maps (at depths 1,2,3) plotted against the spatial orderness of the input at the corresponding scale (for scales 2,4,6). We can observe that they are correlated. Also observe that reducing the spatial orderness of the input with greater block-swaps, leads to a correlated decrease of the spatial orderness of all layers. As This shows that convolutions operations are intrinsically configured to preserve the spatial order in the input.

\begin{figure}
  \centering
    \includegraphics[width=0.85\textwidth]{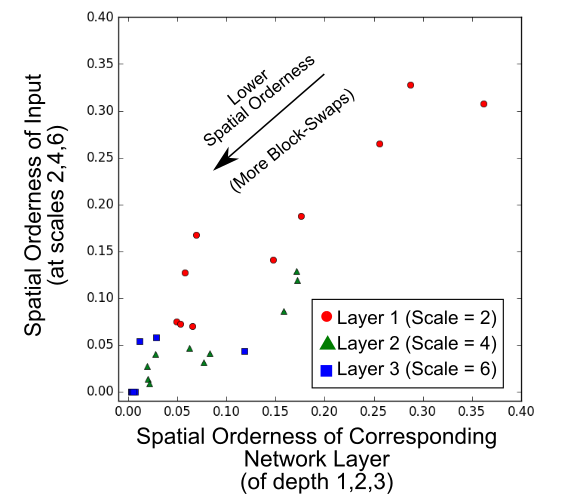}
    \caption{Correlation between spatial orderness of the input data at scales (2,4 and 6),
and the mean spatial orderness within the network feature maps at the corresponding depths (1,2 and 3), after 50
epochs of training.}  
    \label{fig:imness_seq_corr}
\end{figure}

\section{Experiments on MNIST-swap}

\begin{table}
\centering
\scalebox{0.8}{
\begin{tabular}{c|l|l|l|l|l|l|l|l|l}
\hline
\textbf{Change of}                          & \multicolumn{4}{c|}{\textbf{ Error Rate improvement} (\%)} & \multicolumn{4}{c|}{\textbf{Spatial Orderness of Final Layer}} & \textbf{Correlation} \\ \cline{2-10} 
\multicolumn{1}{l|}{\textbf{Network Depth}} & Ns=0     & Ns=10     & Ns=30    & Ns=60    & Ns=0    & Ns=10    & Ns=30   & Ns=60   &             \\ \hline
1-\textgreater{}2                  & 0.42        & 0.38         & 0.29        & 0.18        & 0.35       & 0.35        & 0.24       & 0.16       & 0.99        \\ \hline
2-\textgreater{}3                  & 0.15        & 0.14         & 0.06        & -0.09       & 0.08       & 0.09        & 0.07       & 0.04       & 0.985       \\ \hline
\end{tabular}
}
\caption{Table showing the mean reduction of validation error rate (in \%), when extra convolution layers are added to a CNN, tested and trained on MNIST. To the mid-right, the mean spatial orderness measures computed at the final convolution layer output of the CNNs are shown, before the layer addition. All block-swapping was done at Scale=3.}
\label{tab:mnist_depth_add}
\end{table}

 We wish to narrow down our hypothesis, by conjecturing that a convolution layer directly benefits from the spatial order in the feature maps that it sees. Thus, we compute the spatial orderness of the feature maps of various layers within a CNN. For this experiment, we train CNNs on datasets MNIST-swap(0,10,30,60), consisting of 1,2 and 3 convolution layers each. Next, the spatial orderness of only the final convolution layer's output of each CNN (after the max-pooling) is recorded, using the approach detailed in the previous paragraph. We also note the corresponding improvement in error rates (in \%), between CNNs containing different number of conv layers. For accurate testing, training was repeated for six trials for each CNN, and the all recorded measures (spatial orderness and error rate improvements) are averaged across all the trials. To test our narrowed hypothesis, we simply note the correlation between the spatial orderness of the final layer feature maps within a CNN, and the error improvement obtained by adding another convolution layer to the CNNs. Results are shown in Table \ref{tab:mnist_depth_add}. 

The observations are two-fold. First, we observe that reducing spatial orderness of the input leads to reduction of the spatial orderness in the feature maps, which is intuitively sound. Note that more detailed experiments on this correspondence is provided in the supplementary material. Next, we find a significant correlation between the error rate improvement from convolutional layer additions, and the spatial orderness of the final feature maps within a CNN. These observations explain the results in Section \ref{sec:swap_cnn_depth}.

\section{Variation of Spatial Orderness within feature maps}

In the experiments thus far, all feature maps within each layer were used to compute a single, mean value of spatial orderness. Here, we compute the spatial orderness measure of each feature map within the layers separately, and analyze the changes in their distribution as the network is trained. Figure \ref{fig:imness_dist} shows how spatial orderness is distributed among the units of the final layer of a 3-layer CNN being trained on MNIST, and the changes with training. To summarize the observations,

\begin{figure}
  \centering
    \includegraphics[width=0.9\textwidth]{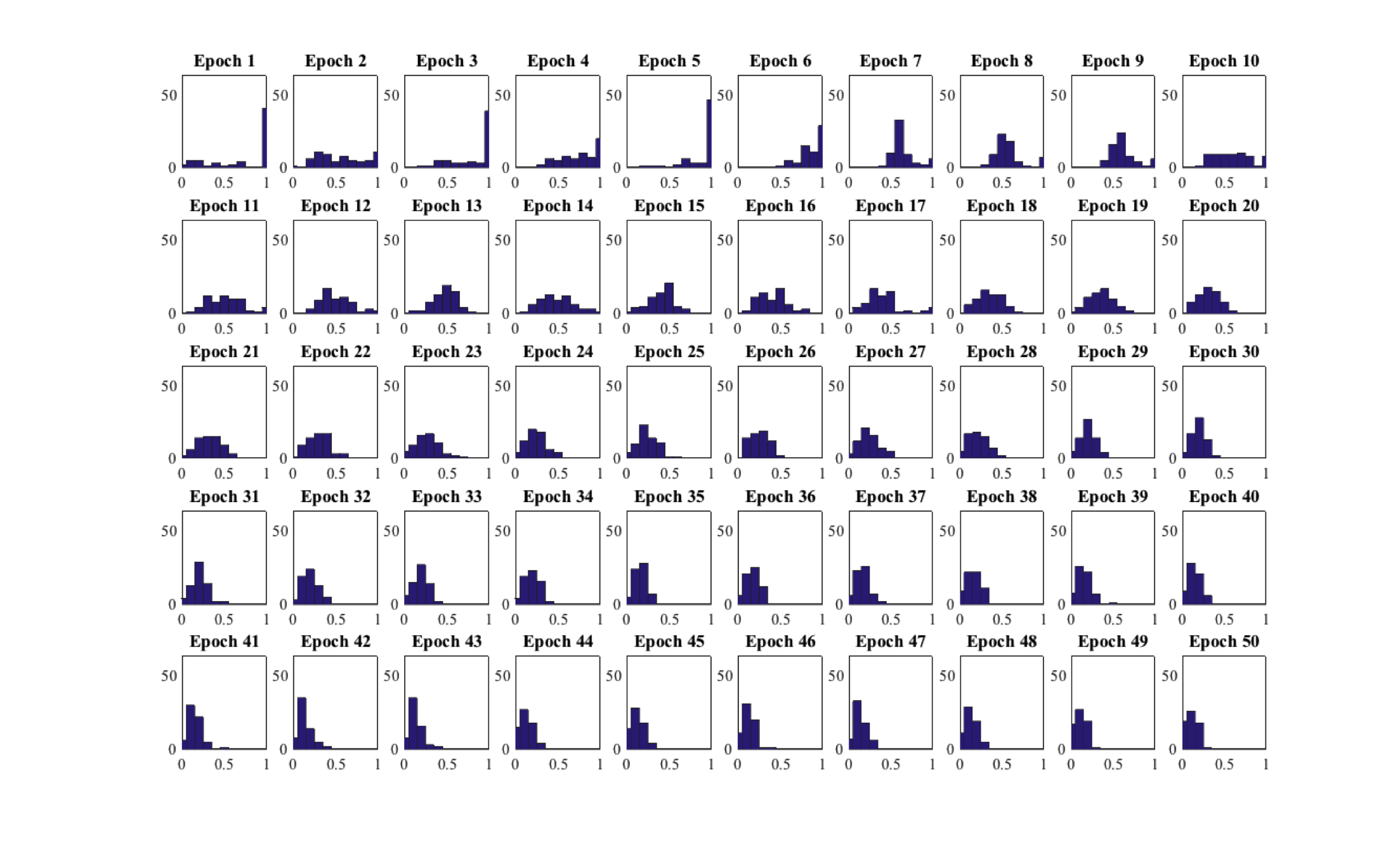}
    \caption{Shown are the histograms of the spatial orderness measures of the feature maps of the final layer of a 3-layer CNN trained on MNIST. The smooth transition from high to low spatial orderness is noticeable.}  
    \label{fig:imness_dist}
\end{figure}




\end{document}